\newcommand{\var}[1]{{\ttfamily#1}}
\DeclareMathOperator*{\argmin}{arg\,min}
\begin{document}

\title{MaLeS: A Framework for Automatic Tuning of Automated Theorem Provers
}


\author{Daniel K\"{u}hlwein, Josef~Urban
}


\institute{Daniel K\"{u}hlwein and Josef Urban \at
              Intelligent Systems, Institute for Computing and Information Sciences,\\
	      Radboud University  Nijmegen. \\              
              \email{daniel.kuehlwein@gmail.com, josef.urban@gmail.com}           
}
%

\date{Received: date / Accepted: date}

\maketitle

\begin{abstract}
MaLeS is an automatic tuning framework for automated theorem
provers (ATPs). It provides solutions for both the strategy finding as well as the strategy scheduling problem.
This paper describes the tool and the methods used in it, and evaluates its performance on three automated theorem
provers: E, LEO-II and Satallax. 
An evaluation on a subset of the TPTP library problems shows that on average a MaLeS-tuned prover 
solves $8.67\%$ more problems than the prover with its default settings.
  
\keywords{strategy selection \and machine learning \and automated theorem provers}
\end{abstract}

\section{Introduction}
\label{sec:Introduction}
Automated theorem proving is a search problem.
Many different approaches exist, and most of them have parameters that can be tuned.
Examples of such parameterizations are clause weighting and selection schemes, 
term orderings, and sets of inference and reduction rules used. 
For a given ATP $A$, its implemented parameters form $A$'s \textit{parameter space}. 
A specific choice of parameters is called a search \emph{strategy},\footnote{
Unfortunately, there is no standard terminology for this. 
In Satallax~\cite{Brown2012} parameters are called flags, and a strategy is called a mode.
Option can be used as synonym for parameter. Configurations and configuration space are other alternative names.}
i.e. strategies are elements of the parameter space (Fig. \ref{fig:Problem_Overview}).
The choice of a strategy can often make the difference between finding a proof in a few milliseconds or not at all (within a reasonable time limit).
This naturally leads to the question: Given a new problem, which search strategy should be used?

Considerable attention has already been paid to this problem. 
Gandalf~\cite{Tammet1997} pioneered \textit{strategy scheduling}: 
Instead of running a single strategy for the whole user defined time limit, run several search strategies sequentially for shorter times.
This method is used in most current ATPs, most prominently Vampire~\cite{Riazanov2002}.
In the SETHEO project~\cite{Wolf1998}, a local search algorithm was used to find better strategy schedules.
Fuchs~\cite{Fuchs1998} employed a nearest neighbor algorithm to determine which strategy(s) to run.
Bridge's~\cite{Bridge2010} thesis is about machine learning for search heuristic selection in ATPs with a particular focus on problem features and feature selection.
In the SAT community, Satzilla~\cite{Xu2008} very successfully used machine learning to decide when to run which SAT solver.
ParamILS~\cite{Hutter2009} is a general tuning framework that searches for good parameter settings with a randomized hill climbing algorithm.
BliStr~\cite{Urban2013a} uses ParamILS to develop strategies for E~\cite{Schulz2002} on a large set of interrelated problems.

Despite all this work, most ATPs do not harness the methods available. 
Search strategies are often manually defined by the developer of the ATP and strategy schedules are created by a greedy algorithm or very simple clustering. 
This chapter introduces \emph{MaLeS} (Machine Learning (of) Strategies), a learning-based framework for automatic tuning and configuration of ATPs.
It is based on and supersedes E-MaLeS 1.0~\cite{Kuhlwein2012} and E-MaLeS 1.1~\cite{Kuehlwein2013}.
The goal of MaLeS is to help ATP users to fine-tune an ATP to their problems 
and give developers a simple tool for finding good search strategies and creating strategy schedules. 
MaLeS is implemented in Python and has been tested with the ATPs E, LEO-II~\cite{Benzmueller2008} and Satallax~\cite{Brown2012}. 
The source code is freely available at \url{https://code.google.com/p/males/}.

\subsection{The Strategy Selection Problem}
Figure~\ref{fig:Problem_Overview} gives an informal overview of the \textit{strategy selection problem}.
Given a problem $p \in P$, find a \emph{strategy(s)} $s$ in the \emph{parameter space} $S$ that can quickly solve this problem.
First, we note that parameter spaces can be very big. For example, the ATP E supports over $10^{17}$ different search strategies.
Hence, to simplify the strategy selection problem, strategy selection algorithms usually only consider a small number of \emph{preselected strategies} $\mathfrak{S}$.
Defining $\mathfrak{S}$ is the first challenge. There are different criteria to determine which strategies should be selected.  
The most common ones are to pick strategies that solve a lot of problems, or are very good for a particular kind of problem.

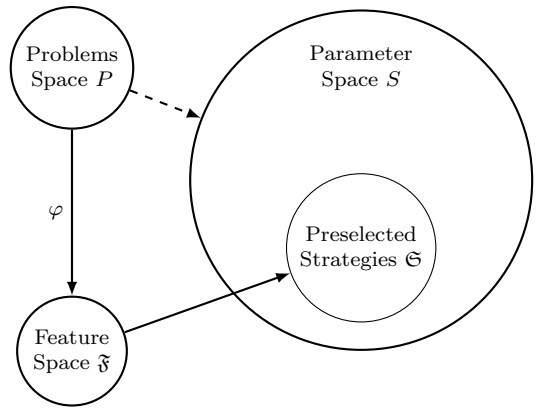
\begin{figure}
\centering
\begin{tikzpicture}
[typetag/.style={rectangle, draw=black!50, font=\scriptsize\ttfamily, anchor=west}]
\tikzstyle{main}=[circle, minimum size = 12mm, thick, draw =black!100, node distance = 22mm,align=center]
\tikzstyle{connect}=[-latex, thick]
\tikzstyle{box}=[rectangle, draw=black!100]
\node [align=center] (s) { Parameter\\Space $S$ };
\node[main, fill=white!100] (p) [left = of s] { Problems\\ Space $P$ };
\node[main, fill=white!100] (f) [below = of p] { Feature\\ Space $\mathfrak{F}$ };
\node [circle,draw =black!100,align=center] (s') [below = of s] {Preselected \\ Strategies $\mathfrak{S}$ };
\node (x) [main, fit={(s) (s')}] {};

\path   (p) edge [connect] node [left] {$\varphi$} (f)
	(p) edge [connect,dashed] (x)
	(f) edge [connect] (s');

\end{tikzpicture}
\caption{Overview of the strategy selection problem for ATPs.}
\label{fig:Problem_Overview}
\end{figure}

As a second step we need a way to characterize problems. 
This is usually done by defining a set of \textit{features} $\mathfrak{F}$.
The features must strike a balance between being fast to compute (via a \emph{feature function} $\varphi$) 
and being expressive enough so that the ATP behaves similarly on problems with similar features.
Once we have defined the features, we still need a way to predict how well each preselected strategy performs on a given set of features.
Finally, one needs to combine the predictions to create a strategy schedule.
Hence, the strategy selection problem consists of three subproblems:
\begin{itemize}
 \item[$\bullet$] Finding a \emph{good} set of preselected strategies $\mathfrak{S}$.
 \item[$\bullet$] Defining features $\mathfrak{F}$ which are easy to compute, but also expressive enough to distinguish different types of problems.
 \item[$\bullet$] Determining a method which given the features of a problem creates a strategy schedule.
\end{itemize}

\subsection{Overview}
\label{sec:Overview}
The rest of the paper is organized as follows:
Section \ref{sec:Finding_Good_Search_Strategies_with_MaLeS} explains how MaLeS defines the preselected strategies $\mathfrak{S}$.
The features and the algorithm that creates the strategy schedule are presented in Section~\ref{sec:Strategy_Scheduling_with_MaLeS}.
MaLeS is evaluated against the default installations of E~1.7, LEO-II~1.6.0 and Satallax~2.7 in Section~\ref{sec:Evaluation}.
The experiments compare the performance of running an ATP in default mode versus running the ATP with strategy scheduling provided by MaLeS.
Future work is considered in Section~\ref{sec:Future_Work}, and the paper concludes with Section~\ref{sec:Conclusion}.
The appendix 
shows how to install the MaLeS-tuned versions of the ATPs mentioned above: E-MaLeS, LEO-MaLeS and Satallax-MaLeS, 
how to tune any of those systems for new problems, and how to use MaLeS with different ATPs. It also includes an overview of the CASC results.

\section{Finding Good Search Strategies with MaLeS}
\label{sec:Finding_Good_Search_Strategies_with_MaLeS}
Choosing a good strategy for a problem requires prior information on how the different strategies behave on different kinds of problems.
Getting this information for all strategies is often infeasible due to constraints on CPU power available and the number possible strategies. 
Hence, one has to decide which strategies one wishes to evaluate.
ATP developers often manually define such a set of strategies based on their intuition and experience.
This option is, however, not available when one lacks in-depth knowledge of the internal workings of the ATP. 
A local search algorithm can help in these cases, and can even be combined with the manual approach by taking the predefined strategies as starting points of the search.

{\small
\begin{algorithm}[th!b]
  \caption{find\_strategies: For each problem search for the strategy that solves it in the least amount of time.}
  \label{alg:findStrategy}
  \begin{algorithmic}[1]
    \Procedure{find\_strategies}{\var{Problems},\var{tol},\var{t\_max},\var{nS},\var{nC}}
    \State initialize Queue $Q$ \label{initQueue}
    \State initialize dictionary \var{bestTime} with \var{t\_max} for all problems
    \State initialize dictionary \var{bestStrategy} as empty
    \While{$Q$ not empty}
    \State $s \gets$ \var{pop}$(Q)$ 
    \For{$p \in \;$\var{Problems}}
    \State \var{oldBestTime} $\gets$ \var{bestTime}$[p]$
    \State \var{proofFound},\var{timeNeeded} $\gets$ \var{run\_strategy}$(s,p,$\var{t\_max}$)$ \label{timeLimit}
    \If{\var{proofFound} \textbf{and} \var{timeNeeded} $<$ \var{bestTime}$[p]$}
      \State \var{bestTime}$[p] \gets$ \var{timeNeeded}
      \State \var{bestStrategy}$[p] \gets s$ 
    \EndIf
    \If{\var{proofFound} \textbf{and} \var{timeNeeded} $<$ \var{bestTime}$[p]+$\var{tol}} \label{local}
      \State \var{randomStrategies} $\gets$ \var{create\_random\_strategies}$(s$,\var{nS},\var{nC}$)$ \label{random}
      \For{$r$ in \var{randomStrategies}}
	\State \var{proofFoundR},\var{timeNeededR} $\gets$ \var{run\_strategy}$(r,p,\text{\var{timeNeeded}})$
	\If{\var{proofFoundR} \textbf{and} \var{timeNeededR}$<$\var{bestTime}$[p]$}
	  \State \var{bestTime}$[p] \gets$ \var{timeNeededR}
	  \State \var{bestStrategy}$[p] \gets r$ 
	\EndIf
      \EndFor
      \If{\var{bestTime}$[p] <$ \var{oldBestTime}}
	\State $Q \gets$ \var{put}$(Q,$\var{bestStrategy}$[p])$ 
      \EndIf
    \EndIf
    \EndFor
    \EndWhile
    \State \Return {\var{bestStrategy}}
    \EndProcedure
  \end{algorithmic}
The initialization of $Q$ in Line~\ref{initQueue} is either done by randomly creating some strategies, or by manually defining which strategies to use. 
Variable \var{tol} defines the tolerance of the algorithm, \var{t\_max} is the maximal time that may be used by the strategy. 
\var{nS} determines the number of strategies generated in the \var{create\_random\_strategies} sub-procedure, 
\var{nC} is an upper limit to how much these new strategies differ from the old one. 
\var{bestStrategy} is a dictionary that for each problems stores the strategy that solved it in the least amount of time.
\end{algorithm}
}

MaLeS employs a basic stochastic local search algorithm labeled \emph{find\_strategies} (Algorithm~\ref{alg:findStrategy}) for ATPs. 
The strategies returned by \emph{find\_strategies} define the \textit{preselected strategies} $\mathfrak{S}$.
The difference to existing parameter selection frameworks like ParamILS and BliStr is that \emph{find\_strategies} searches for each problem for the fastest strategy,
whereas ParamILS tries to find the best strategy for all problems (i.e. find the strategy that 
solves the most problems within some time limit).\footnote{\emph{find\_strategies} is essentially equivalent to running ParamILS on every single problem.} 
BliStr searches for the best strategy for sets of similar problems.

\emph{find\_strategies} takes a list of problems as input. A queue of start strategies is initialized, either with random or predefined strategies.
Each strategy in the queue is then tried on all problems. 
If the strategy solves a problem faster than any of the tried strategies (within some tolerance, see Line~\ref{local}), a local search is performed.
If the search yields faster strategies, the fastest newly found search strategy is appended to the queue.  
In the end, \emph{find\_strategies} returns the strategies that were the fastest strategy on at least one problem.
 
{\small
\begin{algorithm}[th!b]
  \caption{create\_random\_strategies: Returns slight variations of the input strategy.}\label{create_random_strategies}
  \begin{algorithmic}[1]
    \Procedure{create\_random\_strategies}{\var{Strategy},\var{nS},\var{nC}}
    \State \var{newStrategies} is an empty list
    \For{\var{\_i} in range(\var{nS})}\label{numberOfStrategies}    
    \State \var{newStrategy} is a copy of \var{Strategy}
    \For{\var{\_j} in range(\var{nC})}\label{numberOfChanges}
    \State \var{newStrategy} $=$ \var{change\_random\_parameter}$($\var{newStrategy}$)$    
    \EndFor
    \State \var{newStrategies}.\var{append}$($\var{newStrategy}$)$
    \EndFor
    \State \Return{\var{newStrategies}}
    \EndProcedure
  \end{algorithmic}
\var{nS} determines the number of new strategies, \var{nC} is the upper limit for the number of changed parameters.
\end{algorithm}
}
 
The local search part is defined in Algorithm~\ref{create_random_strategies} (\emph{create\_random\_strategies}). 
It returns a predefined number of strategies similar to the input strategy.
The new strategies are created by randomly changing the parameters of the input strategy. 
How many parameters are changed is determined in MaLeS' configuration file.\footnote{Parameter \textit{WalkLength} in Table \ref{tab:setup.ini}}
 
\section{Strategy Scheduling with MaLeS}
\label{sec:Strategy_Scheduling_with_MaLeS}
As mentioned previously, most automated theorem provers, independent of the parameters used, solve problems either very fast, or not at all (within a reasonable time limit).
Instead of trying only a single strategy for a long time, it is often beneficial to run several search strategies for a shorter time.
This approach is called \textit{strategy scheduling}.

Many current ATPs use strategy scheduling to define their default configuration.
Some use a single schedule for every problem (e.g. Satallax 2.7). 
Others define classes of similar problems and use different schedules for different classes (e.g. E 1.7, LEO-II 1.6.0).
MaLeS creates an \textit{individual strategy schedule} for each problem, depending on the problem's \emph{features}. 

\subsection{Notation}
\label{sec:Notation}
We shall use the following notation:
\renewcommand{\labelitemi}{$\cdot$}
\begin{itemize}
 \item $p$ is an ATP problem. $P$ denotes a set of problems.
 \item $P_\text{train} \subseteq P$ is a set of training problems that is used to tune the learning algorithm.
 \item $\mathfrak{F}$ is the feature space. We assume that $\mathfrak{F}$ is a subset of $\mathbb{R}^n$ for some $n \in \mathbb{N}$.
 \item $\varphi : P \rightarrow \mathfrak{F}$ is the \textit{feature function}. $\varphi(p)$ is the feature vector of a problem.
 \item $S$ is the parameter space, $\mathfrak{S}$ is the set of preselected strategies.
 \item The time the ATP running strategy $s$ needs to solve a problem $p$ is denoted by $\tau(p,s)$. 
If $s$ is obvious from the context or irrelevant, we also use $\tau(p)$.
 \item For a strategy $s$, $\rho_s : P \rightarrow \mathbb{R}$ is the runtime prediction function.
\end{itemize}

For each strategy $s$ in the preselected strategies $\mathfrak{S}$, 
MaLeS defines a runtime \textit{prediction function} $\rho_s : P \rightarrow \mathbb{R}$.
The prediction function $\rho_s$ uses the features of a problem to predict the time the ATP running strategy $s$ needs to solve the problem.
The strategy schedule for the problem is created from these predictions.

\subsection{Features}
\label{sec:Features}
Features give an abstract description of a problem. 
Optimally, the features should be designed in such a way that the ATP behaves similar on problems with similar features,
i.e. if two problem $p,q$ have similar features $\varphi(p) \sim \varphi(q)$, 
then for each strategy $s$ the runtimes should be similar $\tau(p,s) \sim \tau(q,s)$.
The similarity function (e.g. cosine distance between the feature vectors) and set of features heavily influence the quality of the prediction functions. 
Indeed, feature selection is an entire subfield of machine learning~\cite{Liu1998,Guyon2003}.

Currently, MaLeS supports two different feature spaces:
Schulz's E features are used for first order (FOF) problems. 
The TPTP features designed by Sutcliffe are used for higher order (THF) problems \cite{Sutcliffe2010a}.
Note that the main reason for using these features was that they were easily available.
Evaluating different features sets and/or introducing new features is beyond the scope of this paper. 

\subsubsection{The E Features}
Schulz designed a set of features for clause-normal-form and first order problems.
They are used in the strategy selection process in his theorem prover E~\cite{Kuehlwein2013}.
Table~\ref{EFeatures} shows the features together with a short description.\footnote{The author would like to thank Stephan Schulz 
for the design of the features, the program that extracts them and their precise description in this subsection.}
MaLeS uses the same features for first-order problems.
A clause is called \emph{negative} if it only has negative
literals. It is called \emph{positive} if it only has positive
literals. A ground clause is a clause that contains no variables.  In
this setting, we refer to all negative clauses as ``goals'', and to all
other clauses as ``axioms". Clauses can be \emph{unit} (having only a
single literal), \emph{Horn} (having at most one positive literal), or
\emph{general} (no constraints on the form). All unit
clauses are Horn, and all Horn clauses are general.

\begin{table}[htbp]
\centering
\caption{Problem features used for strategy selection in E and in first-order MaLeS.}
\resizebox{\columnwidth}{!}{%
\begin{tabular}{lp{0.65\linewidth}}
\toprule
Feature & Description \\\midrule
\texttt{axioms}   & Most specific class (unit, Horn, general) describing
all axioms \\
\texttt{goals}    & Most specific class (unit, Horn) describing
all goals \\
\texttt{equality} & Problem has no equational literals, some
equational literals, or only equational literals\\
\texttt{non-ground units} & Number (or fraction) of unit axioms that
are not ground\\
\texttt{ground-goals} & Are all goals ground?\\
\texttt{clauses} &  Number of clauses \\
\texttt{literals} &  Number of literals \\
\texttt{term\_cells} &  Number of all (sub)terms \\
\texttt{unitgoals} &  Number of unit goals (negative clauses)\\
\texttt{unitaxioms} &  Number of positive unit clauses \\
\texttt{horngoals} &  Number of Horn goals (non-unit)\\
\texttt{hornaxioms} &  Number of Horn axioms (non-unit)\\
\texttt{eq\_clauses} & Number of unit equations\\
\texttt{groundunitaxioms} & Number of ground unit axioms \\
\texttt{groundgoals} & Number of ground goals\\
\texttt{groundpositiveaxioms} & Number (or fraction) of positive
axioms that are ground\\
\texttt{positiveaxioms} & Number of all positive axioms\\
\texttt{ng\_unit\_axioms\_part} & Number of non-ground unit axioms\\
\texttt{max\_fun\_arity} & Maximal arity of a function or predicate symbol \\
\texttt{avg\_fun\_arity} & Average arity of symbols in the problem \\
\texttt{sum\_fun\_arity} & Sum of arities of symbols in the problem\\
\texttt{clause\_max\_depth} &  Maximal clause depth\\
\texttt{clause\_avg\_depth} &  Average clause depth\\
 \bottomrule
\end{tabular}
}
\label{EFeatures}
\end{table}

The features are computed by running Schulz's \emph{classify\_problem} program which is distributed with MaLeS.

\subsubsection{The TPTP Features}
The TPTP problem library~\cite{Sutcliffe2009} provides a syntactical description of every problem which can be used as problem features.
Figure~\ref{TPTPfeatures} shows an example.
Before normalization, the feature vector corresponding to the example is $$[145,5,47,31,1106,\hdots,147,0,0,0,0]$$
Sutcliffe's MakeListStats computes these features and is publicly available as part of the TPTP infrastructure.
A modified version which outputs only the numbers without any text is also distributed with MaLeS.

\begin{figure}[htb!]
\centering
\footnotesize{
\begin{verbatim}
% Syntax   : Number of formulae    :  145 (   5 unit;  47 type;  31 defn)
%            Number of atoms       : 1106 (  36 equality; 255 variable)
%            Maximal formula depth :   11 (   7 average)
%            Number of connectives :  760 (   4   ~;   4   |;   8   &; 736   @)
%                                         (   0 <=>;   8  =>;   0  <=;   0 <~>)
%                                         (   0  ~|;   0  ~&;   0  !!;   0  ??)
%            Number of type conns  :  235 ( 235   >;   0   *;   0   +;   0  <<)
%            Number of symbols     :   52 (  47   :)
%            Number of variables   :  147 (   3 sgn;  29   !;   6   ?; 112   ^)
%                                         ( 147   :;   0  !>;   0  ?*)
%                                         (   0  @-;   0  @+) 
\end{verbatim}
}
\caption{The TPTP features of the THF problem AGT029\textasciicircum1.p in TPTP-v5.4.0.}
\label{TPTPfeatures}
\end{figure}

\subsubsection{Normalization}
In the initial form, there can be great differences between the values of different features.
In the THF example (Figure~\ref{TPTPfeatures}), the number of atoms ($1106$) is of a different order of magnitude than e.g. the maximal formula depth ($7$).
Since our machine learning method (like many other) computes the euclidean distance between data points, these differences can render smaller valued features irrelevant.
Hence, normalization is used to scale all features to have values between $0$ and $1$. 
First we compute the features for each $p \in P_\text{train}$.
Then the maximal and minimal value of each feature $f$ is determined.
These values are then used to rescale the feature vectors for each problem $p$ via
$$\varphi(p)_f :=  \frac{\varphi(p)_f-\text{min}_f}{\text{max}_f-\text{min}_f}$$
where $\varphi(p)_f$ is the value of feature $f$ for problem $p$,
$\text{min}_f$ is the minimal and $\text{max}_f$ is the maximal value for $f$ among the problems in $P_\text{train}$.

\subsection{Runtime Prediction Functions}
\label{sec:Runtime_Prediction_Functions}
Predicting the runtime of an ATP is a classic regression problem \cite{Bishop2006}.
For each strategy $s$ in the preselected strategies $\mathfrak{S}$, we are searching for a function $\rho_s : P \rightarrow \mathbb{R}$ such that
for all problems $p \in P$ the predicted values are close to the actual runtimes: $\rho_s(p) \sim \tau(p,s)$.
This section explains the learning method employed by MaLeS as well as the data preparation techniques used.

\subsubsection{Timeouts}
\label{sec:Timeouts}
The prediction functions are learned from the behavior of the preselected strategies on the training problems $P_\text{train}$.
Each preselected strategy is run on all training problems with a timeout $t$.
Often, strategies will not solve all problems within the timeout. This leads to the question how one should treat unsolved problems.
Setting the time value of an unsolved problem-strategy pair $(p,s)$ to the timeout, i.e. $\tau(p,s) = t$, is one possible solution.
Another possibility, which is used in MaLeS, is to only learn on problems that can be solved. 
While ignoring unsolved problems introduces a bias towards shorter runtimes, it also simplifies the computation of the prediction functions and
allows us to \emph{update} the prediction functions at runtime (Section \ref{sec:Creating_Schedules_from_Prediction_Functions}).

\subsubsection{Kernel Methods}
\label{sec:Kernel_Methods}
MaLeS uses \textit{kernels} to learn the runtime prediction function. 
Kernels are a very popular machine learning method that has successfully been applied in many domains \cite{Shawe-Taylor2004}.
A kernel can be seen as a similarity function between feature vectors. 
Kernels allow the usage of nonlinear features while keeping the learning problem itself linear.  
The basic principles will be covered on the next pages.
More information about kernel-based machine learning can be found in \cite{Shawe-Taylor2004}.

\begin{definition}[Gaussian Kernel]
The Gaussian kernel $k$ with parameter $\sigma$ of two problems $p,q \in P$ with feature vectors $\varphi(p),\varphi(q) \in \mathfrak{F} \subseteq \mathbb{R}^n$ for some $n \in \mathbb{N}$ is defined as
\[
k(p,q) := \exp\left(-\frac{ \varphi(p)^T \varphi(p)  -  2 \varphi(p)^T \varphi(q) + \varphi(q)^T \varphi(q)}{\sigma^{2}}\right)
\]
$\varphi(p)^T$ is the transposed vector, and hence $\varphi(p)^T \varphi(q)$ is the dot product between $\varphi(p)$ and $\varphi(q)$ in $\mathbb{R}^n$.
\end{definition}

In order to apply machine learning, we first need some data to learn from.
Let $t \in \mathbb{R}$ be a time limit. 
For each preselected strategy $s \in \mathfrak{S}$, the ATP is run with strategy $s$ and time limit $t$ on each problem in $P_\text{train}$.
Note that the same $t$ is used for all problems. 
For each strategy $s$, $P_\text{train}^s \subseteq P_\text{train}$ is the set of problems that the ATP can solve within the time limit $t$ with strategy $s$.

\begin{definition}[The Prediction Function]
In kernel based machine learning, the prediction function $\rho_s$ has the form 
$$\rho_s(p) = \sum_{q \in P_\text{train}^s} \alpha^s_q k(p,q)$$ 
for some $\alpha^s_q \in \mathbb{R}$. 
The $\alpha^s_q$ are called weights and are the result of the learning. 
To define how exactly this is done, some more notation is needed.
\end{definition}

\begin{definition}[Kernel Matrix, Times Matrix and Weights Matrix]
For every strategy $s \in \mathfrak{S}$, let $m$ be the number of problems in $P_\text{train}^s$ and $(p_i)_{i \in m}$ be an enumeration of the problems in $P_\text{train}^s$. 
The kernel matrix $K^s \in \mathbb{R}^{m \times m}$ is defined as
\[
K^s_{i,j} := k(p_i,p_j)
\]
We define the time matrix $Y^s \in \mathbb{R}^{1 \times m}$ via
\[
Y_{i}^s := \tau(p_i,s)
\]
Finally, we set the weight matrix $A^s \in  \mathbb{R}^{m \times 1}$ as
\[
A_i^s := \alpha^s_{p_i}
\]
If is it obvious which strategy is meant, or the statement is independent of the strategy, we omit the $^s$ in $K^s$,$Y^s$ and $A^s$.
\end{definition}

A simple way to define values for the weights $\alpha^s_{p_i}$ would be to solve $KA = Y$.
Such a solution (if it exists) would likely perform very well on known data but poorly on new data, a behavior called \textit{overfitting}.
As a measure against overfitting, a regularization parameter $\lambda \in \mathbb{R}$ is added and 
least square regression is used to minimize the difference between the predicted times and the actual times \cite{Rifkin2003}. That means we want
\[
A = \argmin_{A \in \mathbb{R}^{m \times 1}} \left( \left(Y - KA\right)^T \left(Y - KA\right) + \lambda A^T K A \right)
\]
The first part of the equation $\left(Y - KA\right)^T \left(Y - KA\right)$ is the square loss between the predicted values and the actual time needed.
$\lambda A^T K A$ is the regularization term. $A^T K A$ is a measure of how complex, in terms of VC dimension \cite{Vapnik1995}, our prediction function is. 
The bigger $\lambda$, the more complex functions are penalized. 
For very high values of $\lambda$, we force $A$ to be almost equal to the $0$ matrix. 
This approach can be seen as a kind of Occam's razor for prediction functions. 
$A$ is the matrix that best fits the training data while staying as simple as possible.

\begin{theorem}[Weight Matrix for a Strategy]
For $\lambda > 0$, the optimal weights for a strategy $s$ are given by 
\[
A = (K+\lambda I)^{-1} Y
\] 
with $I$ being the identity matrix in $\mathbb{R}^{m \times m}$.
\end{theorem}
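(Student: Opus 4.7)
The plan is to recognize the objective as a convex quadratic in $A$ and apply first-order optimality. First I would note that the kernel matrix $K$ is symmetric (since $k(p,q) = k(q,p)$) and positive semi-definite (as a Gram matrix of a valid kernel). Expanding
$$J(A) := (Y - KA)^T(Y - KA) + \lambda A^T K A = Y^T Y - 2 A^T K Y + A^T K^2 A + \lambda A^T K A,$$
one sees that the Hessian is $2(K^2 + \lambda K) = 2K(K + \lambda I)$, which is positive semi-definite because $K \succeq 0$ and $K + \lambda I \succ 0$ for $\lambda > 0$. Hence $J$ is convex and every critical point is a global minimizer, so it suffices to exhibit one.

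Second, I would compute the gradient
$$\nabla_A J(A) = -2 K Y + 2 K^2 A + 2 \lambda K A = 2 K\bigl((K + \lambda I) A - Y\bigr)$$
and set it to zero, obtaining the normal equation $K(K + \lambda I) A = K Y$. The matrix $K + \lambda I$ has all eigenvalues at least $\lambda > 0$ and is therefore strictly positive definite, so $(K+\lambda I)^{-1}$ exists. Taking $A^\star := (K + \lambda I)^{-1} Y$ gives $(K+\lambda I) A^\star = Y$, which in turn yields $K(K+\lambda I)A^\star = KY$. Thus $A^\star$ satisfies the first-order condition, and by convexity it is a global minimizer of $J$, which is exactly the formula claimed in the theorem.

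The main obstacle I anticipate is that $K$ itself need not be invertible: two training problems could share a feature vector, or the feature vectors could be linearly dependent, so one cannot left-cancel $K$ in the normal equation $K(K+\lambda I)A = KY$ to argue uniqueness by naive inversion. This does not obstruct the proof, since the theorem asserts an expression for \emph{an} optimal weight matrix and the direct verification above works verbatim. It is worth remarking that any two minimizers yield identical values of $KA$, so the resulting prediction function $\rho_s(p) = \sum_{q \in P_\text{train}^s} \alpha^s_q\, k(p,q)$ is determined on inputs whose kernel vectors lie in the range of $K$, and the non-uniqueness of $A$ is harmless for the downstream scheduling algorithm.
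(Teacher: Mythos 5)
Your proposal is correct and follows essentially the same route as the paper: compute the gradient, use positive semi-definiteness of $K$ to get invertibility of $K+\lambda I$, derive the normal equation $K(K+\lambda I)A = KY$, and verify that $A=(K+\lambda I)^{-1}Y$ solves it. You are in fact somewhat more careful than the paper's own proof, which sets the derivative to zero without checking convexity (so that the critical point is actually a minimizer) and does not address the non-invertibility of $K$; both of these points are handled correctly in your argument.
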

\begin{proof}
\begin{eqnarray*}
& \frac{\partial}{\partial A}\left( (Y-KA)^\textrm{T} (Y-KA)+\lambda A^\textrm{T}KA \right)\\
= &-2K(Y-KA)
+2\lambda KA\\
= &-2K Y
+(2KK+2\lambda K)A
\end{eqnarray*}
It can be shown that $K$ is a positive-semi definite symmetric matrix and therefore $(K+\lambda I)$ is invertible for $\lambda > 0$.
To find a minimum, we set the derivative to zero and solve with respect to $A$. 
\begin{eqnarray*}
K(K+\lambda I)A &=&KY
\end{eqnarray*} 
and hence
\begin{eqnarray*}
A &=&(K+\lambda I)^{-1} Y
\end{eqnarray*} 
is a solution.
\end{proof}

\subsection{Crossvalidation}
\label{sec:Crossvalidation}
Finally, the values for the regularization constant $\lambda$ and the kernel width $\sigma$ need to be determined. 
This is done via $10$-fold \textit{cross-validation} on the training problems, a standard machine learning method for such tasks \cite{Kohavi1995}.
Cross-validation simulates the effect of not knowing the data and picks the values that perform, in general, best on unknown problems.

First a finite number of possible values for $\lambda$ and $\sigma$ is defined. 
Then, the training set $P_\text{train}^s$ is split in $10$ disjoint, equally sized subsets $P_1 ,\hdots P_{10}$.
For all $1 \leq i \leq 10$, each possible combination of values for $\lambda$ and $\sigma$ is trained on $P_\text{train}^s -  P_i$ and evaluated on $P_i$. 
The evaluation is done by computing the square-loss between the predicted runtimes and the actual runtimes.
The combination with the least average square loss is used.

\subsection{Creating Schedules from Prediction Functions}
\label{sec:Creating_Schedules_from_Prediction_Functions}
MaLeS uses the knowledge of how different strategies perform on a set of training problems
to estimate how these strategies will behave on a new problem. This is done by learning
runtime prediction functions as described above using the data gathered with Algorithm 1. With the runtime prediction functions
we can create individual strategy schedules for new problems, i.e. compute a strategy schedule for
every set of features.

Given a new problem, MaLeS iterates between computing the predicted runtimes for each strategy, running the predicted best strategy and updating the prediction models.
Algorithm~\ref{males} shows the details.

\begin{algorithm}[htb!]
  \caption{males: Tries to solve the input problem within the time limit. Creates and runs a strategy schedule for the problem.}\label{males}
  \begin{algorithmic}[1]
    \Procedure{males}{\var{problem},\var{time}}
\State \var{proofFound},\var{timeUsed} $\leftarrow$ \var{run\_start\_strategies}(\var{problem},\var{time}) \label{tryStratStrategies}
\If{\var{proofFound}}
\State \Return \var{timeUsed}
\EndIf
\While{\var{timeUsed} $<$ \text{time}} 
\State Set \var{times} as an empty list
\For{$s \in \mathfrak{S}$}
\State $t_s \leftarrow \rho_s($\var{problem}$)$
\State \var{times}.\var{append}$([t_s,s])$
\EndFor
\State $([t_{s'},s']) \leftarrow\;$\var{choose\_best\_strategy}$($\var{times}$)$ \label{chooseBest}
\State \var{proofFound},\var{timeNeeded} $\leftarrow$ \var{run\_strategy}$(s',$\var{problem}$,t_{s'})$ \label{runStrategy}
\State \var{timeUsed} $+=$ \var{timeNeeded}
\If{\var{proofFound}}
\State \Return \var{timeUsed}
\EndIf
\For{$s \in \mathfrak{S}$}
\State \var{timeUsed} $+=$ \var{update\_prediction\_function($\rho_s$,$s'$,$t_{s'}$)} \label{updateModels}
\EndFor
\EndWhile
\State \Return timeUsed
    \EndProcedure
  \end{algorithmic}
\end{algorithm}

In line \ref{tryStratStrategies} the algorithm starts by running some predefined start strategies.
The goal of running these start strategies first is to filter out simple problems which allows the learning algorithm to focus on the harder problems. 
The start strategies are picked greedily. First the strategy that solves most problems (within some time limit) is chosen.
Then the strategy that solves most of the problems that were not solved by the first picked strategy (within some time limit) is picked, etc.
The number of start strategies and their runtime are determined via their respective parameters in the \emph{setup.ini} file (Table~\ref{tab:setup.ini}).
Training problems that are solved by the start strategies are deleted from the training set.
For example, let $s_1,\hdots,s_n$ be the starting strategies, all with a runtime of $1$ second. 
Then for all $s \in S'$ we can set 
$$P_\text{train}^s := \{p \in P_\text{train}^s \mid \forall \; 1\leq i\leq n \; \tau(p,s_i) > 1 \}$$
and train $\rho_s$ on the updated $P_\text{train}^s$.

The subprocedure \var{choose\_best\_strategy} in line \ref{chooseBest} picks the strategy 
with the minimum predicted runtime among those that have not been run with a bigger or equal runtime before.\footnote{If there are several strategies with the same
minimal predicted runtime a random one is chosen.}
\var{run\_strategy} runs the ATP with strategy $s'$ and time limit $t_{s'}$ on the \var{problem}.
If the ATP cannot solve the problem within the time limit, this information is used to improve the prediction functions 
in \var{update\_prediction\_function} (Line \ref{updateModels}).
For this, all the training problems that are solved by the picked strategy $s'$ within the predicted runtime $t_{s'}$ are deleted 
from the training set $P_\text{train}$, i.e. for all $s \in S'$
$$P_\text{train}^s := \{p \in P_\text{train}^s \mid \tau(p,s') > t_{s'} \}$$
Afterwards, new prediction functions are learned on the reduced training set. 
This is done by first creating a new kernel and time matrix for the new $P_\text{train}^s$ and then computing new weights as shown in Theorem 1.
Due to the small size of the training dataset, this can be done in real time during a proof.
Note that these updates are local, i.e. do not have any effect on future calls to \textsc{males}.
If \textsc{males} finds a proof, the total time needed is returned to the user.

\section{Evaluation}
\label{sec:Evaluation}
MaLeS is evaluated with three different ATPs: E 1.7\footnote{E 1.7 was the current version of E when the experiments were done. 
Several significant changes were introduced in E 1.8, in particular new strategies and E's own strategy scheduling. 
As a result, E 1.8 performs better than both E 1.7 and E-MaLeS 1.2. We hope to remedy this situation in the next version of MaLeS.}, LEO-II 1.6 and Satallax 2.7. 
For every prover, a set of training and testing problems is defined. 
MaLeS first searches for good strategies on the training problems using Algorithm \ref{alg:findStrategy} with a $10$ second time limit, i.e. $t_\text{max}=10$.
Promising strategies are then run for $300$ seconds on all training problems.
The resulting data is used to learn runtime prediction functions and strategy schedules as explained in the previous section.
After the learning, MaLeS uses Algorithm 3 when trying to solve a new problem. 
The difference between the different MaLeS versions (i.e. E-MaLeS, Satallax-MaLeS and Leo-MaLeS) is the training data used to create the prediction functions
and start strategies, and the ATP that is run in the \var{run\_strategy} part of Algorithm 3.
The MaLeS version of the ATP is compared with the default mode on both the test and the training problems.
The section ends with an overview of previous versions of MaLeS and their CASC performance.

\subsection{E-MaLeS}
\label{sec:E-MaLeS}
E is a popular ATP for first order logic. 
It is open source, easily available and consistently performs very well at the CASC competitions.
Additionally, E is easily tunable with a big parameter space\footnote{The parameter space considered in the experiments contains more than $10^{17}$ different strategies.} 
which suggested that parameter tuning could lead to significant improvements.
All computations were done on a $64$ core AMD Opteron Processor 6276 with $1.4$GHz per CPU and $256$ GB of RAM

\subsubsection{E's Automatic Mode}
E's automatic mode is developed by Stephan Schulz and based on a static partitioning of the set of all problems into disjoint classes. 
It is generated in two steps. First, the set of all training examples (typically the set of
all current TPTP problems) is classified into disjoint classes using
some of the features listed in Table~\ref{EFeatures}. For the numeric features,
threshold values have originally been selected to split the TPTP into
3 or 4 approximately equal subsets on each feature. Over time, these
have been manually adapted using trial and error.

Once the classification is fixed, a Python program assigns to
each class one of the strategies that solves the most examples in this
class. For \emph{large} classes (arbitrarily defined as having more
than 200 problems), it picks the strategy that also is on average the fastest on that
class. For small classes, it picks the globally best strategy among
those that solve the maximum number of problems. A class with zero
solutions by all strategies is assigned the overall best strategy.

\subsubsection{The Training Data}
The problems from the FOF divisions of CASC-22~\cite{Sutcliffe2010}, CASC-J5~\cite{Sutcliffe2011}, CASC-23~\cite{Sutcliffe2012} 
and CASC-J6 and CASC@Turing~\cite{Sutcliffe2013} were used as training problems.
Several problems appeared in more than one CASC. There are also a few problems from earlier CASCs that are not part of the TPTP version used in the experiments, TPTP-v5.4.0.
Deleting duplicates and missing problems leaves $1112$ problems that were used to train E-MaLeS.
The strategy search for the set of preselected strategies took three weeks on a $64$ core server. 
The majority of the time was spent running promising strategies with a $300$ seconds time limit.
Over $2$ million strategies were considered.
Of those, $109$ were selected to be used in E-MaLeS. E-MaLeS runs $10$ start strategies, each with a $1$ second time limit.
E 1.7 (running the automatic mode) and E-MaLeS were evaluated on all training problems with a $300$ second time limit.
The results can be seen in Figure \ref{fig:etrainperformance}.

\begin{figure*}[!htb]
\centering
  \includegraphics[width=0.75\textwidth]{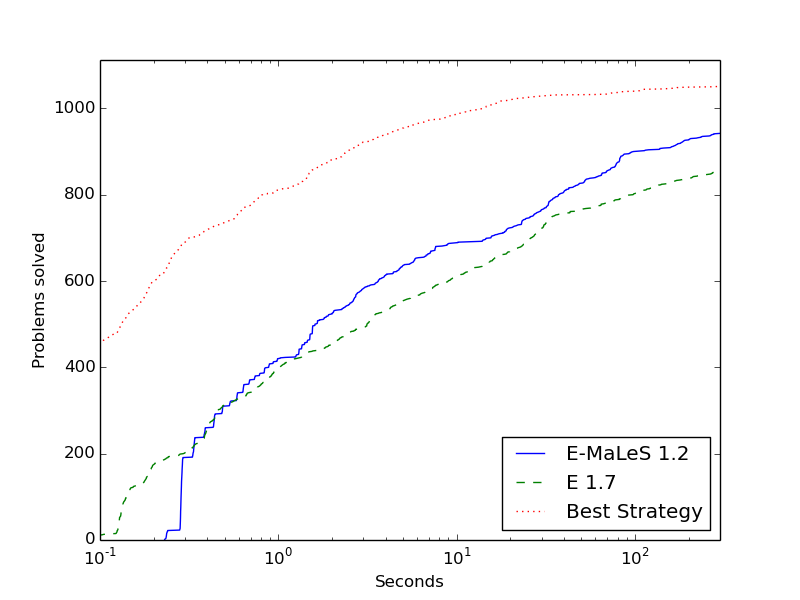}
\caption{Performance graph for E-MaLeS 1.2 on the training problems.}
\label{fig:etrainperformance}       
\end{figure*}

Altogether, $1055$, or $94.9\%$, of the problems can be solved by E 1.7 with the considered strategies.
E 1.7's automatic mode solves $856$ of the problems ($77.0\%$), E-MaLeS solves $10.0\%$ more problems: $942$ ($84.7\%$).  
\emph{Best Strategy} shows the best possible result, i.e. the number of problems solved if for each problem the strategy 
that solves it in the least amount of time was picked.

\subsubsection{The Test Data}
Similar to the way the problems for CASC are chosen, $1000$ random FOF problems of TPTP-v5.4.0 with a difficulty rating \cite{Sutcliffe2001}
between $0.2$ and (including) $1.0$ were chosen for the test dataset.
$165$ of the test problems are also part of the training dataset.

The results are similar to the results on the training problems and can be seen in Figure~\ref{fig:etestperformance}. 
In the first three seconds, E solves more problems than E-MaLeS. Afterwards, E-MaLeS overtakes E.
After $300$ seconds, E-MaLeS solves $573$ of the problems ($57.3\%$) and E 1.7 $511$ ($51.1\%$), an increase of $12.4\%$. 
Figure~\ref{fig:enewtestperformance} shows the results for only the $835$ problems that are not part of the training problems.

\begin{figure*}[!htb]
\centering
  \includegraphics[width=0.75\textwidth]{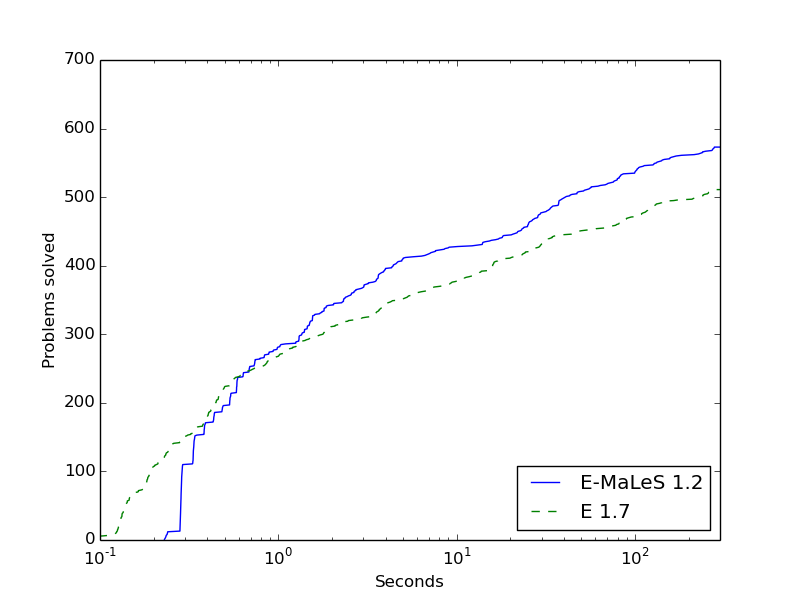}
\caption{Performance graph for E-MaLeS 1.2 on the test problems.}
\label{fig:etestperformance}       
\end{figure*}

\begin{figure*}[!htb]
\centering
  \includegraphics[width=0.75\textwidth]{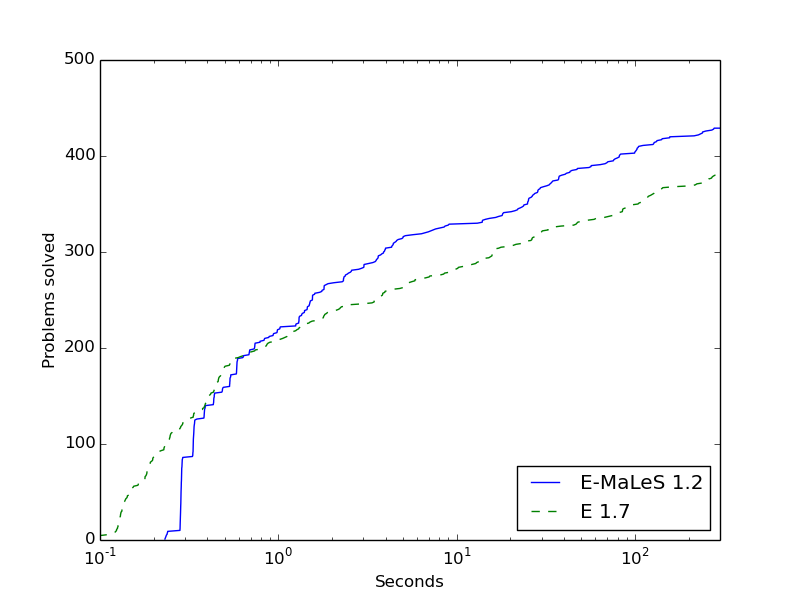}
\caption{Performance graph for E-MaLeS 1.2 on the unseen test problems.}
\label{fig:enewtestperformance}       
\end{figure*}

\subsection{Satallax-MaLeS}
\label{sec:Satallax-MaLeS}
In order to show that MaLeS works for arbitrary ATPs, we picked a very different ATP for the next experiment: Satallax. 
Satallax is a higher order theorem prover that has a reputation of being highly tuned.
The built-in strategy schedule of Satallax solves $95.3\%$ of all solvable problems in the training dataset and, 
with the right parameters, $91.3\%$ ($525$) of the training problems can be solved in less than $1$ second.
The strategy search for the set of preselected strategies was done on a $32$ core Intel Xeon with $2.6$GHz per CPU and $256$ GB of RAM.
The evaluations were done on a $64$ core AMD Opteron Processor 6276 with $1.4$GHz per CPU and $256$ GB of RAM.

\subsubsection{Satallax's Automatic Mode}
Satallax employs a hard-coded strategy schedule that defines a sequence of strategies together with their runtimes.
The same schedule is used for all problems. 
It is defined in the file \textit{satallaxmain.ml} in the \textit{src} directory of the Satallax installation.
Many modes are only run for a very short time ($0.2$ seconds).
This can cause problems if Satallax is run on CPUs that are slower than the one(s) used to create this schedule.

\subsubsection{The Training Data}
\label{satallax:training_data}
The problems from the THF divisions of CASC-J5~\cite{Sutcliffe2011}, CASC-23~\cite{Sutcliffe2012} and CASC-J6~\cite{Sutcliffe2013} were used as training problems.
The THF division of CASC-J5 contained $200$ problems, of CASC-23 $300$ problem, and of CASC-J6 also $200$ problems.
After deleting duplicates and problems that are not available in TPTP-v5.4.0, $573$ problems remain.
The strategy search took approximately $3$ weeks. 
In the end, $111$ strategies were selected to be used in Satallax-MaLeS.
Satallax-MaLeS runs $20$ start strategies, each with a $0.5$ second time limit.

$533$ of the $573$ problems are solvable with the appropriate strategy.
Satallax and Satallax-MaLeS were evaluated on all training problems with a $300$ second time limit.
Satallax solves $508$ of the problems ($88.7\%$). 
Satallax-MaLeS solves $1.6\%$ more problems for a total of $516$ solved problems ($90.1\%$).  

\begin{figure*}[!htb]
\centering
  \includegraphics[width=0.75\textwidth]{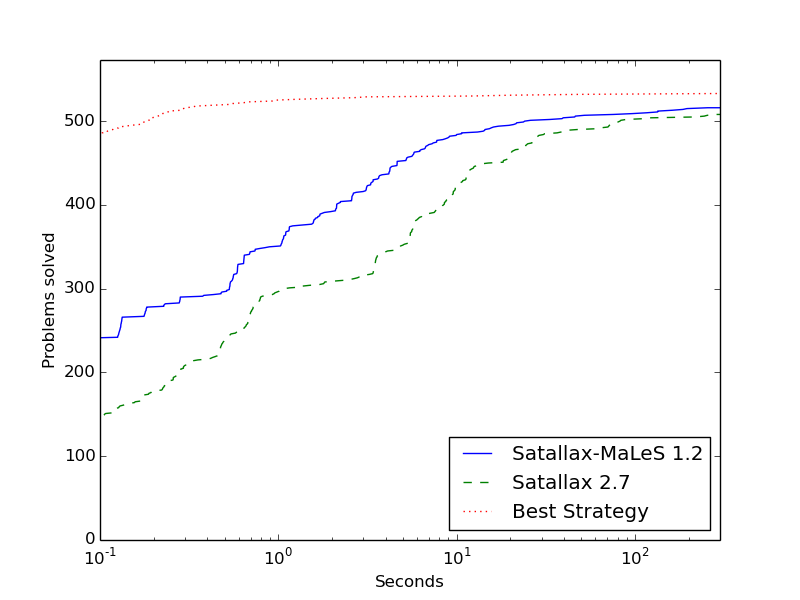}
\caption{Performance graph for Satallax-MaLeS 1.2 on the training problems.}
\label{fig:satallaxtrainperformance}      
\end{figure*}

Figure~\ref{fig:satallaxtrainperformance} shows a log-scaled time plot of the results.
For low time limits, Satallax-MaLeS solves significantly more problems than Satallax.
It seems that Satallax's automatic mode is very suboptimal which might be a result of only focusing on the number of problems solved after $300$ seconds.
\emph{Best Strategy} shows the best possible result, i.e. the number of problems solved if for each problem the strategy 
that solves it in the least amount of time was picked.

\subsubsection{The Test Data}
Similar to the E-MaLeS evaluation, the test dataset consists of $1000$ randomly selected THF problems of TPTP-v5.4.0 with a difficulty rating between $0.2$ and (including) $1.0$.
$301$ of the test problems are also part of the training dataset.
The results are similar to the results on the training problems and can be seen in Figure~\ref{fig:satallaxtestperformance}. 
While the end results are almost the same with Satallax-MaLeS solving $590$ ($59.0\%$ ) and Satallax solving $587$ ($58.7\%$) of the problems, 
Satallax-MaLeS significantly outperforms Satallax for lower time limits.

Figure~\ref{fig:satallaxtestnewperformance} shows the results for only the $699$ problems that are not part of the training problems.
Here, Satallax-MaLeS solves more problems than Satallax in the beginning, but fewer for longer time limits.
After $300$ seconds, Satallax solves $344$ and Satallax-MaLeS $336$ problems.

\begin{figure*}[!htb]
\centering
  \includegraphics[width=0.75\textwidth]{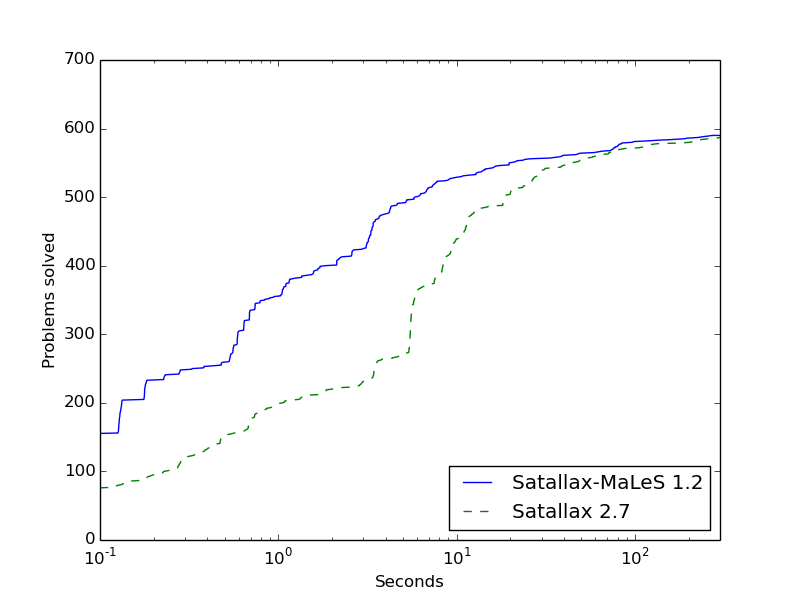}
\caption{Performance graph for Satallax-MaLeS 1.2 on the test problems.}
\label{fig:satallaxtestperformance}      
\end{figure*}

\begin{figure*}[!htb]
\centering
  \includegraphics[width=0.75\textwidth]{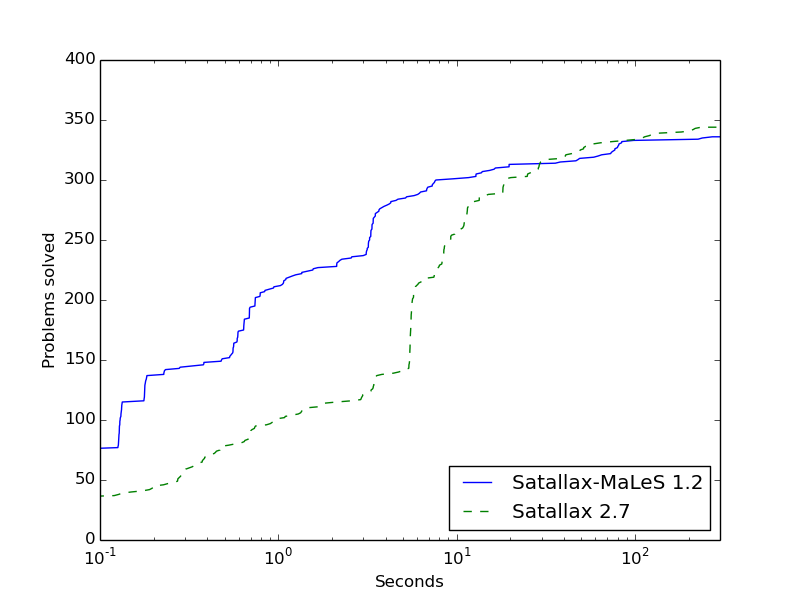}
\caption{Performance graph for Satallax-MaLeS 1.2 on the unseen test problems.}
\label{fig:satallaxtestnewperformance}   
\end{figure*}

\subsection{LEO-MaLeS}
\label{sec:LEO-MaLeS}
LEO-MaLeS is the latest addition to the MaLeS family.
LEO-II is a resolution-based higher-order theorem prover designed for fruitful cooperation with specialist provers 
for natural fragments of higher-order logic.\footnote{Description from the LEO-II website \url{www.leoprover.org}.}
The strategy search for the set of preselected strategies, and all evaluations were done on a $32$ core Intel Xeon with $2.6$GHz per CPU and $256$ GB of RAM.

\subsubsection{LEO-II's Automatic Mode}
LEO-II's automatic mode is a combination of E's and Satallax's automatic modes.
The problem space is split into disjoint subspaces and a different strategy schedule is used for each subspace.
The automatic mode is defined in the file \textit{strategy\_scheduling.ml} in the \textit{src/interfaces} directory of the LEO-II installation.

\subsubsection{The Training and Test Datasets}
The same training and test problems as for the Satallax evaluation were used.
The strategy search took $2$ weeks. 
$89$ strategies were selected.
LEO-II and LEO-MaLeS were run with a $300$ second time limit per problem. 

Of the $573$ training problems $472$ can be solved by LEO-II if the correct strategy is picked.
LEO-MaLeS runs $5$ start strategies, each with a $1$ second time limit. 
Using more start strategies only marginally increases the number of solved problems by the start strategies.
LEO-II's default mode solves $415$ of the training problems ($72.4\%$), and $367$ of the test problems ($36.7\%$).
LEO-MaLeS improves this to $441$ ($77.0\%$) and $417$ ($41.7\%$) solved problems respectively.
Figure~\ref{fig:leotrainperformance} and Figure~\ref{fig:leotestperformance} show the graphs.
Figure~\ref{fig:leotestnewperformance} shows the results for only the $699$ problems that are not part of the training problems.

\begin{figure*}[!htb]
\centering
\includegraphics[width=0.75\textwidth]{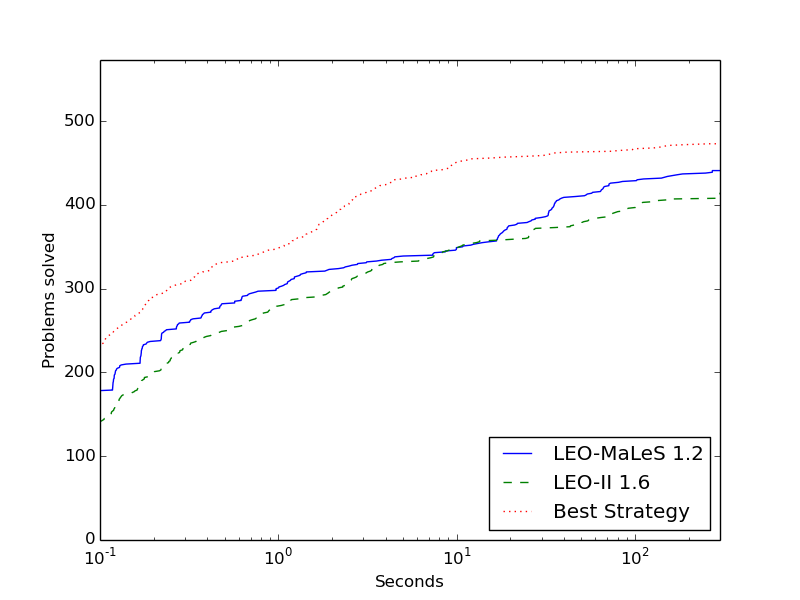}
\caption{Performance graph for LEO-MaLeS 1.2 on the training problems.}
\label{fig:leotrainperformance}    
\end{figure*}

\begin{figure*}[!htb]
\centering
\includegraphics[width=0.75\textwidth]{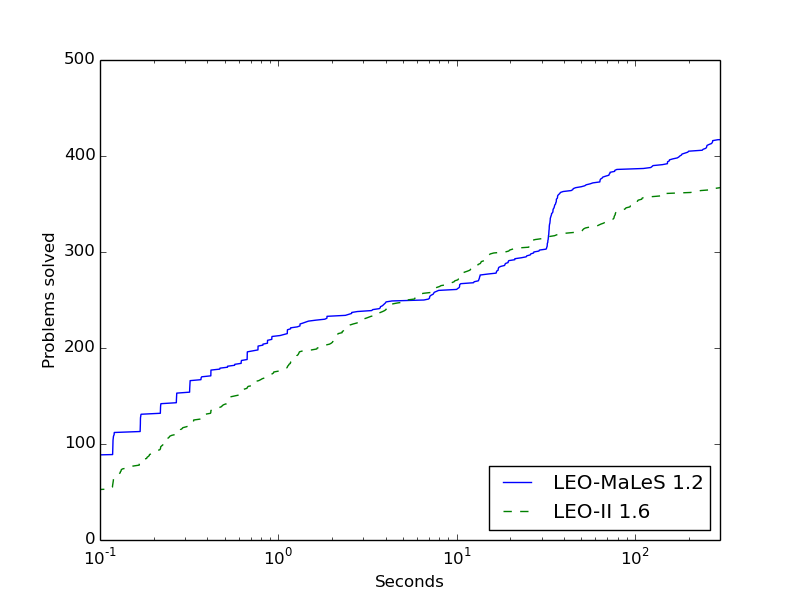}
\caption{Performance graph for LEO-MaLeS 1.2 on the test problems.}
\label{fig:leotestperformance}       
\end{figure*}

\begin{figure*}[!htb]
\centering
\includegraphics[width=0.75\textwidth]{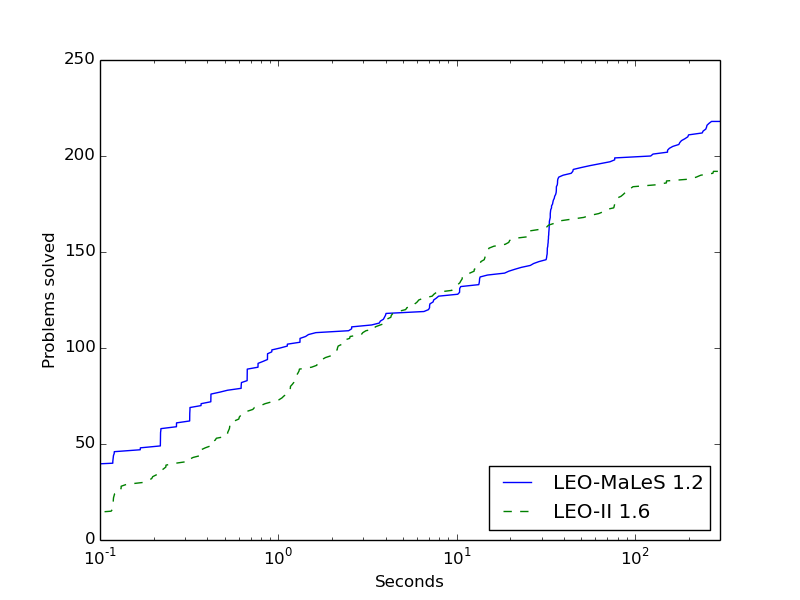}
\caption{Performance graph for Leo-MaLeS 1.2 on the unseen test problems.}
\label{fig:leotestnewperformance}     
\end{figure*}

Between $7$ and $20$ seconds, both provers solve approximately the same number of problems. 
For all other time limits, LEO-MaLeS solves more.
On the test problems, a similar time frame is problematic for LEO-MaLeS. 
LEO-II solves more problems than LEO-MaLeS between $5$ and $30$ seconds.
For other time limits, LEO-MaLeS solves more problems than LEO-II .
This behavior indicates that the initial predictions of LEO-MaLeS are wrong. 
Better features could help remedy this problem.
The sudden jump in the number of solved problems at around $30$ seconds on the test dataset seems peculiar.
Upon inspection, we found that $42$ out of $43$ problems solved in the $30 - 35$ seconds time frame are from the SEU (Set Theory) problem domain.
These problems have very similar features and hence MaLeS creates similar strategy schedules. 
$34$ of the $43$ problems were solved by the same strategy. 

\subsection{Further Remarks}
There are a few things to note that are independent of the underlying prover.

\paragraph{Multi-core Evaluations:}
All the evaluations were done on multi-core machines, a $64$ core AMD Opteron Processor 6276 with $1.4$GHz per CPU and $256$ GB of RAM 
and a $32$ core Intel Xeon with $2.6$GHz per CPU and $256$ GB of RAM. 
All runtimes were measured in wall-clock time.
During the evaluation we noticed irregularities in the runtime of the ATPs.
When running a single instance of an ATP, the time needed to solve a problem often differed from the result we got when running several instances in parallel, 
even when using less than the maximum number of cores.
It turns out that the number of cores used during the evaluation heavily influences the performance.
The more cores, the worse the ATPs performed. 
We were not able to completely determine the cause, but the speed of the hard disk drive, shared cache and process swapping are all possible explanations.
Reducing the hard disk drive load by changing the behavior of MaLeS from loading all models at the very beginning to only when they are needed did lead to 
more (and faster) solved problems.
Eventually, all evaluation experiments (apart from the strategy searches for the sets of preselected strategies) were redone using 
only $20$ out of $64$ / $14$ out of $32$ cores and the results reported here are based on those runs.

\paragraph{How Good are the Predictions?}
Apart from the total number of solved problems, the quality of the predictions is also of interest.
In short, they are not very good. 
The predictions of MaLeS are already heavily biased because the unsolvable problems are ignored (Section \ref{sec:Timeouts}).
Reducing the number of training problems during the update phase makes the predictions even less reliable.
For some strategies, the average difference between the actual and predicted runtimes exceeds $40$ seconds.
Two heuristics were added to help MaLeS to deal with this uncertainty. 
First, the predicted runtime must always exceed the minimal runtime of the training data. 
This prevents unreasonably low (in particular negative) predictions.
Second, if the number of training problems is less than a predefined minimum (set to $5$) then the predicted runtime is the maximum runtime of the training data.
That MaLeS nevertheless gives good results is likely due to the fact that the tested ATPs all utilize either no or very basic strategy scheduling.

\paragraph{The Impact of the Learning Parameters:}
Table~\ref{tab:setup.ini} shows the learning parameters of MaLeS.
\emph{Tolerance}, \emph{StartStrategies} and \emph{StartStrategiesTime} had the greatest impact in our experiments.
Tolerance influences the number of strategies used in MaLeS. A low value means more strategies, a high value less.
For E and LEO, higher values ($1.0 - 15.0$ seconds) gave better results since fewer irrelevant strategies were run.
Satallax performed slightly better with a low tolerance which is probably due to the fact that it can solve almost every problem in less than a second.
The values for StartStrategies and StartStrategiesTime determine how many problems are left for learning.
$10$ StartStrategies with a $1$ second $StartStrategiesTime$ are good default values for the provers tested. 
For LEO-II we found that the number of solved problems barely increased after $5$ seconds, and hence changed to number of StartStrategies to $5$.

\section{Future Work}
\label{sec:Future_Work}
Apart from simplifying the installation and set up, there are several other ways to improve MaLeS. We present the most promising ones.

\paragraph{Automated Parameter Configuration:}
\label{sec:FF:AutomatedParameterConfiguration}
Parameters like \emph{Tolerance}, \emph{StartStrategies} and \emph{StartStrategiesTime} could and should be set automatically. 
We hope to implement this in the next version of MaLeS.

\paragraph{Features:}
\label{sec:FF:Features}
The quality of the runtime prediction function is limited by the quality of the features.
Adding new features and/or integrating feature selection algorithms could increase the prediction capabilities of MaLeS.

\paragraph{Strategy Finding:}
\label{sec:FF:StratFinding}
As an alternative to randomized hill climbing, different search algorithms should be supported.
In particular simulated annealing and genetic algorithms seem promising.
The biggest problem of the current implementation, the time it needs to find good strategies, could be improved
by using a clusterized local search principle similar to the one employed in BliStr~\cite{Urban2013a}.

\paragraph{Strategy Prediction:}
\label{sec:FF:StratPrediction}
The runtime prediction function are the heart of MaLeS. 
Machine learning offers dozens of different regression methods which could be used instead of the kernel methods of MaLeS.
A big drawback of the current approach is that it scales badly due to the need to invert a new matrix after every tried strategy. 
One possible solution for eliminating the need for matrix computations and also the dependency on Numpy and Scipy would be a nearest neighbor algorithm.

\section{Conclusion}
\label{sec:Conclusion}
Finding the best parameter settings and strategy schedules for an ATP is a time consuming task that often requires in-depth knowledge of how the ATP works.
MaLeS is an automatic tuning framework for ATPs that, given the possible parameter settings of an ATP and a set of problems, 
finds good search strategies and creates individual strategy schedules.
MaLeS currently supports E, LEO-II and Satallax and can easily be extended to work with other provers.

Experiments with the ATPs E, LEO-II and Satallax showed that the MaLeS version performs at least comparable to the respective default strategy selection algorithm.
In some cases, the MaLeS optimized version solves considerably more problems than the untuned ATP.

MaLeS aims to simplifies the workflow for both ATP users and developers.
It allows ATP users to fine-tune ATPs to their specific problems and helps ATP developers to focus on actual improvements instead of time-consuming parameter tuning.

\begin{acknowledgements}
The authors were supported by the Nederlandse organisatie voor Wetenschappelijk Onderzoek (NWO) projects 
``MathWiki: A Web-based Collaborative Authoring Environment for Formal Proofs'' and ``Learning2Reason''.

Christoph Benzm\"uller, Chad Brown, Stephan Schulz and Geoff Sutcliffe made this work possible by publicly releasing their programs and providing support whenever problems occurred. 
We would like to thank the anonymous reviewers, Jasmin Christian Blanchette and Michael Nahas for their comments on earlier versions of this paper.
\end{acknowledgements}

\bibliography{ate14}
\bibliographystyle{spmpsci}      

\section{Appendix}
\subsection{Using MaLeS}
\label{sec:Using_MaLeS}
MaLeS aims to be a general ATP tuning framework.
In this section, we show how to setup E-MaLeS, LEO-MaLeS and Satallax-MaLeS, tuning any of those provers on new problems, and how to use MaLeS with a completely new prover.
The first step is to clone the MaLeS git repository via
\begin{center}
 \texttt{git clone} \url{https://code.google.com/p/males/} 
\end{center}
MaLeS requires Python 2.7, Numpy 1.6 or later, and Scipy 0.10  or later \cite{Oliphant2007}. 
Installation instructions for Numpy and Scipy can be found at \url{http://www.scipy.org/install.html}.

\subsubsection{E-MaLeS, LEO-MaLeS and Satallax-MaLeS}
\label{sec:X-MaLeS}
Setting up any of the presented systems can be done in three steps.
\begin{enumerate}
 \item Install the ATP (E, LEO-II or Satallax)
 \item Run the configuration script with the location of the prover as argument. 
      For example \begin{verbatim}EConfig.py --location=../E/PROVER\end{verbatim} 
      for E-MaLeS.
 \item Learn the prediction function via \begin{verbatim}MaLeS/learn.py\end{verbatim} 
\end{enumerate}
After the installation, MaLeS can be used by running
\begin{verbatim}
 MaLeS/males.py -t 30 -p test/PUZ001+1.p
\end{verbatim}
where $-t$ denotes the time limit and $-p$ the problem to be solved.

\subsubsection{Tuning E, LEO-II or Satallax for a New Set of Problems}
\label{sec:Tuning_New_Data}
Tuning an ATP for a particular set of problems involves finding good search strategies and learning prediction models.
The search behavior is defined in the the file \emph{setup.ini} in the main directory.
Using the default search behavior, E, LEO-II and Satallax can be tuned for new problems as follows:

\begin{enumerate}
 \item Install the ATP (E, LEO-II or Satallax)
 \item Run the configuration script with the location of the prover as argument. 
      For example \begin{verbatim}EConfig.py --location=../E/PROVER\end{verbatim} 
      for E-MaLeS.
 \item Store the absolute pathnames of the problems in a new file with one problem per line 
and change the \emph{PROBLEM} parameter in \emph{setup.ini} to the file containing the problem paths.
  \item Find promising strategies by searching with a short time limit (which is the default setup) 
\begin{verbatim}
MaLeS/findStrategies.py 
\end{verbatim}
  \item (Optional) Run all promising strategies for a longer time. For this several parameters need to be changed.
\begin{enumerate}
 \item Copy the value of \emph{ResultsDir} to \emph{TmpResultsDir}.
 \item Copy the value of \emph{ResultsPickle} to \emph{TmpResultsPickle}.
 \item Change the value of \emph{ResultsDir} to a new directory.
 \item Change the value of \emph{ResultsPickle} to a new file.
 \item Change \emph{Time} in search to the maximal runtime (in seconds), e.g. $300$.
 \item Set \emph{FullTime} to True.
 \item Set \emph{TryWithNewDefaultTime} to True.
\end{enumerate}
  \item (Optional) Run \emph{findStrategies} again.
\begin{verbatim}
MaLeS/findStrategies.py
\end{verbatim}
  \item The newly found strategies are stored in \emph{ResultsDir}. MaLeS can now learn from these strategies via
\begin{verbatim}MaLeS/learn.py\end{verbatim} 
\end{enumerate}

For completeness, Table~\ref{tab:setup.ini} contains a list of all parameters in \emph{setup.ini} with their descriptions.

\begin{table}[htbp]
\centering
\caption{Parameters of MaLeS}
\begin{tabular}{lp{0.65\linewidth}}
\toprule
Settings Parameter & Description \\\midrule
\texttt{TPTP}   & The TPTP directory. Not required.\\
\texttt{TmpDir}    & Directory for temporary files.\\
\texttt{Cores} & How many cores to use.\\
\texttt{ResultsDir} & Directory where the results of the findStrategies are stored.\\
\texttt{ResultsPickle} & Directory where the models are stored.\\
\texttt{TmpResultsDir} & Like \emph{ResultsDir}, but only used if \emph{TryWithNewDefaultTime} is True.\\
\texttt{TmpResultsPickle} & Like \emph{ResultsPickle}, but only used if \emph{TryWithNewDefaultTime} is True.\\
\texttt{Clear} & If True, all existing results are ignored and MaLeS starts from scratch.\\
\texttt{LogToFile} & If True, a log file is created.\\
\texttt{LogFile} & Name of the log file.\\
& \\
Search Parameter & Description \\\midrule
\texttt{Time}   & Maximal runtime during search.\\
\texttt{Problems}    & File with the absolute pathnames of the problems.\\
\texttt{FullTime} & If True, the ATP is run for the value of \emph{Time}. If False, it is run for the rounded minimal time required to solve the problem.\\
\texttt{TryWithNewDefaultTime} & If True, findStrategies uses the best strategies from \emph{TmpResultsDir} and \emph{TmpResultsPickle} as a start strategies for a new search.\\
\texttt{Walks} & How many different strategies are tried in the local search step.\\
\texttt{WalkLength} & Up to this many parameters are changed for each strategy in the local search step.\\
& \\
Learn Parameter & Description \\\midrule
\texttt{Features} & Which features to use. Possible values are \emph{E} for the E features and \emph{TPTP} for the TPTP features.\\
\texttt{FeaturesFile} & Location of the feature file.\\
\texttt{StrategiesFile} & Location of the strategies file.\\
\texttt{KernelFile} & Location of the file containing the kernel matrices.\\
\texttt{RegularizationGrid} & Possible values for $\lambda$.\\
\texttt{KernelGrid} & Possible values for $\sigma$.\\
\texttt{CrossValidate} & If False, no crossvalidation is done during learning. Instead the first values in \emph{RegularizationGrid} and \emph{KernelGrid} are used.\\
\texttt{CrossValidationFolds} & How many folds to use during crossvalidation.\\
\texttt{StartStrategies} & Number of start strategies.\\
\texttt{StartStrategiesTime} & Runtime of each start strategy.\\
\texttt{CPU Bias} & This value is added to each runtime before learning. Serves as a buffer against runtime irregularities.\\
\texttt{Tolerance} & For a strategy $s$ to be considered as a good strategy, there must be at least one problem where the difference of the best runtime of any strategy and the runtime of $s$ is at most this value.\\
& \\
Run Parameter & Description \\\midrule
\texttt{CPUSpeedRatio}   & Predicted runtimes are multiplied with this value. Useful if the training was done on a different machine.\\
\texttt{MinRunTime}    & Minimal time a strategy is run.\\
\texttt{Features} & Either \emph{TPTP} for higher order features or \emph{E} for first order features.\\
\texttt{StrategiesFile} & Location of the strategies file.\\
\texttt{FeaturesFile} & Location of the feature file.\\
\texttt{OutputFile} & If not None, the output of MaLeS is stored in this file.\\
\bottomrule
\end{tabular}
\label{tab:setup.ini}
\end{table}

\subsubsection{Using a New Prover}
\label{sec:New_Prover}
The behavior of MaLeS is defined in three configuration files: \emph{ATP.ini} defines the ATP and its parameters, 
\emph{setup.ini} configures the searching and learning of MaLeS and \emph{strategies.ini} contains the default strategies of the ATP 
that form the starting point of the strategy search for the set of preselected strategies.
To use a new prover, \emph{ATP.ini} and \emph{strategies.ini} need to be adapted.
Table~\ref{tab:ATP.ini} describes the parameters in \emph{ATP.ini}.

\begin{table}[htbp!]
\centering
\caption{Parameters in ATP.ini}
\begin{tabular}{lp{0.6\linewidth}}
\toprule
ATP Settings Parameter & Description \\\midrule
\texttt{binary}   & Path to the ATP binary.\\
\texttt{time}    & Argument used to denote the time limit.\\
\texttt{problem} & Argument used to denote the problem.\\
\texttt{strategy} & Defines how parameters are given to the ATP. Three styles are supported: \emph{E}, \emph{LEO} and \emph{Satallax}.\\
\texttt{default} & Any default parameters that should always be used.\\
\bottomrule
\end{tabular}
\label{tab:ATP.ini}
\end{table}

The section \emph{Boolean Parameters} contains all flags that are given without a value.
\emph{List Parameters} contains flags which require a value and their possible values.
MaLeS searches strategies in the parameter space defined by \emph{Boolean Parameters} and \emph{List Parameters}.
Running \emph{EConfig.py} creates the configuration file for E which can serve an example.

Different ATPs have (unfortunately) different input formats for search parameters. 
MaLeS currently supports three formats: \emph{E}, \emph{LEO} or \emph{Satallax}.
Each format corresponds to the format of the respective ATP. 
Table~\ref{tab:formats} lists the differences. 
New formats need to be hardcoded in the file \emph{Strategy.py}.

\verbdef{\Etext}{---ordering=3 -sine13}
\verbdef{\LEOtext}{---ordering 3}
\verbdef{\Stext}{-m M}
\begin{table}[htbp!]
\centering
\caption{ATP Formats}
\begin{tabular}{lp{0.75\linewidth}}
\toprule
Format & Description \\\midrule
\texttt{E}   & Parameters and their values are joined by = if the parameter starts with -\--. Else the parameter is directly joined with its value. For example 
\Etext.\\
\texttt{LEO}    & Parameters and their values are joined by a space. For example
\LEOtext.\\
\texttt{Satallax} & The parameters are written in a new mode file $M$. The ATP is then called with ATP \Stext.\\
\bottomrule
\end{tabular}
\label{tab:formats}
\end{table}

Strategies defined in \emph{strategies.ini}  are used to initialize the strategy queue during the strategy searching for the set of preselected strategies.
The default ini format is used. Each strategy is its own section with each parameter on a separate line. For example \newpage
{\small{
\begin{verbatim}
[NewStrategy12884]
FILTER_START = 0
ENUM_IMP = 100
INITIAL_SUBTERMS_AS_INSTANTIATIONS = true
E_TIMEOUT = 1
POST_CONFRONT3_DELAY = 1000
FORALL_DELAY = 0
LEIBEQ_TO_PRIMEQ = true
\end{verbatim}
}
}
At least one strategy must be defined. 
After the ini files are adapted, the new ATP can be tuned and run using the procedure defined in the last two sections.

\subsection{CASC Results}
\label{sec:CASC}
MaLeS 1.2 is the third iteration of the MaLeS framework.
E-MaLeS 1.0 competed at CASC-23, E-MaLeS 1.1 at CASC@Turing and CASC-J6, 
and E-MaLeS 1.2 at CASC-24.
Satallax-MaLeS competed for the first time at CASC-24.
We give an overview of the older versions, the CASC performance and the changes over the years.

\subsubsection{CASC-23}
\label{sec:CASC-23}
E-MaLeS 1.0 \cite{Kuhlwein2012} was the first MaLeS version to compete at CASC. 
Stephan Schulz provided us with a set of strategies and information about their performance on all TPTP problems.
This data was used to train a kernel-based classification model for each strategy.
Given the features of a problem $p$, the classification models predict whether or not a strategy can solve $p$.
Altogether, three strategies were run. First E's auto mode for $60$ seconds, 
then the strategy with the highest probability of solving the problem as predicted by a Gaussian kernel classifier for $120$ seconds.
Finally the strategy with the highest probability of solving the problem as predicted by a linear (dot-product) kernel classifier was run for the remainder of the available time.
E-MaLeS 1.0 won third place in the FOF division. Table~\ref{tab:Sutcliffe2012} shows the results.

\begin{table}
\caption{Results of the FOF division of CASC 23}
\label{tab:Sutcliffe2012}
\centering
\begin{tabular}{lrrrr}
ATP &	Vampire 0.6 & Vampire 1.8  & E-MaLeS 1.0 & EP 1.4 pre \\
\midrule
Solved & $269 / 300$ & $263 / 300$ & $233 / 300$ & $232 / 300$ \\
Average CPU Time & $12.95$ & $13.62$ & $18.85$& $22.55$ \\
\bottomrule
\end{tabular}
\end{table}

\subsubsection{CASC@Turing and CASC-J6}
\label{sec:CASC@Turing}
E-MaLeS 1.1 \cite{Kuehlwein2013} changed the learning from classification to regression. 
Like E-MaLeS 1.0, E-MaLeS 1.1 learned from (an updated version of) Schulz's data.
Instead of predicting which strategy to run, E-MaLeS 1.1 learned runtime prediction functions.
The learning method is the same as the one presented in this chapter, without the updating of the prediction functions.
E-MaLeS 1.1 first ran E's auto mode for $60$ seconds. 
Afterwards, each strategy was run for its predicted runtime, starting with the strategy with the lowest predicted runtime.
E-MaLeS 1.1 won second place in the FOF divisions of both CASC@Turing (Table~\ref{tab:Sutcliffe2013}) and CASC-J6 (Table~\ref{tab:CASCTuring}).
It also came fourth in the LTB division of CASC-J6.

\begin{table}
\caption{Results of the FOF division of CASC-J6}
\label{tab:Sutcliffe2013}
\centering
\begin{tabular}{lrrrr}
ATP &	Vampire 2.6 & E-MaLeS 1.1 & EP 1.6 pre & Vampire 0.6 \\
\midrule
Solved & $429 / 450$ & $377 / 450$ & $359 / 450$ & $355 / 450$ \\
Average CPU Time & $13.17$ & $17.85$ & $13.46$& $11.81$ \\
\bottomrule
\end{tabular}
\end{table}

\begin{table}
\caption{Results of the FOF division of CASC@Turing}
\label{tab:CASCTuring}
\centering
\begin{tabular}{lrrrr}
ATP &	Vampire 2.6 & E-MaLeS 1.1 & EP 1.6 pre & Vampire 0.6 \\
\midrule
Solved & $469 / 500$ & $401 / 500$ & $378 / 500$ & $368 / 500$ \\
Average CPU Time & $20.26$ & $20.81$ & $14.49$& $16.40$ \\
\bottomrule
\end{tabular}
\end{table}

\subsubsection{CASC-24}
\label{sec:CASC-24}
E-MaLeS 1.2 and Satallax-MaLeS 1.2 competed at CASC 24, both based on the algorithms presented in this chapter.
E-MaLeS 1.2 used Schulz's strategies as start strategies for \textit{find\_strategies}.
It is the first E-MaLeS that was not based on the CASC version of E (E 1.7 in E-MaLeS 1.2 vs E 1.8). 
E-MaLeS 1.2 got fourth place in the FOF division, losing to two versions of Vampire, and E 1.8.
Several significant changes were introduced in E 1.8, in particular new strategies and E's own strategy scheduling.
Satallax-MaLeS won first place in the THF division before Satallax.
The results can be seen in Tables~\ref{tab:CASC24FOF} and \ref{tab:CASC24THF}.

\begin{table}
\caption{Results of the FOF division of CASC 24}
\label{tab:CASC24FOF}
\centering
\begin{tabular}{lrrrr}
ATP &	Vampire 2.6 & Vampire 3.0 & EP 1.8 & E-MaLeS 1.2  \\
\midrule
Solved & $281 / 300$ & $274 / 300$ & $249 / 300$ & $237 / 300$ \\
Average CPU Time & $12.24$ & $10.91$ & $29.02$& $14.52$ \\
\bottomrule
\end{tabular}
\end{table}

\begin{table}
\caption{Results of the THF division of CASC 24}
\label{tab:CASC24THF}
\centering
\begin{tabular}{lrrr}
ATP &	Satallax-MaLeS 1.2 & Satallax & Isabelle 2013 \\
\midrule
Solved & $119 / 150$ & $116 / 150$ & $108 / 150$ \\
Average CPU Time & $10.42$ & $11.39$ & $54.65$ \\
\bottomrule
\end{tabular}
\end{table}

\end{document}